\documentclass[journal]{IEEEtran}

\usepackage{url}

\usepackage{algorithm}
\usepackage{algorithmic}
\usepackage{graphicx}
\usepackage{bbm}
\usepackage{amsfonts}
\usepackage{amsmath}
\usepackage{xcolor}
\usepackage{enumerate}
\usepackage{multirow}
\usepackage{subcaption}

\newtheorem{theorem}{Theorem}
\newtheorem{assumption}{Assumption}

\newtheorem{lemma}{Lemma}

\newtheorem{corollary}{Corollary}

\newcommand{\rev}[1]{{\color{black}#1}}

\renewcommand{\appendixname}{Supplementary Material (Appendix)}

\begin{document}

\title{Coverage Guarantees for Pseudo-Calibrated Conformal Prediction under Distribution Shift}

\author{Farbod Siahkali, \IEEEmembership{Graduate Student Member, IEEE}, Ashwin Verma, \IEEEmembership{Member, IEEE}, Vijay Gupta, \IEEEmembership{Fellow, IEEE}
\thanks{This work was supported in part by the U.S. Army Research Office under Grant 13001664. Authors are with the Elmore Family School of Electrical and Computer Engineering, at Purdue University, West Lafayette, IN 47907 USA (e-mail: {\tt{\{siahkali,verma240,gupta869\}}@purdue.edu).}}
}

% \markboth{Journal of \LaTeX\ Class Files, Vol. 14, No. 8, August 2015}
% {Shell \MakeLowercase{\textit{et al.}}: Bare Demo of IEEEtran.cls for IEEE Journals}
\maketitle

\begin{abstract}
\rev{Conformal prediction (CP) provides distribution-free marginal coverage under exchangeability, but coverage can fail under distribution shift. We study pseudo-calibration for unlabeled target data under bounded label-conditional feature shift. Using domain-adaptation tools, we derive target coverage lower bounds in terms of source classifier loss and Wasserstein shift. We also analyze fixed slack inflation of the pseudo-calibrated threshold and use this result to motivate a heuristic threshold adjustment. Finally, we propose source-tuned pseudo-calibration, which interpolates between hard pseudo-labels and randomized labels based on classifier uncertainty. Experiments on MNIST, CIFAR-10, and CIFAR-100 show that the proposed method mitigates coverage degradation under shift, with larger expected set size.}
\end{abstract}

% \begin{IEEEkeywords}
% Conformal prediction, distribution shift, domain adaptation, pseudo-calibration.
% \end{IEEEkeywords}

\IEEEpeerreviewmaketitle

\vspace{-0.4em}

\section{Introduction}

\IEEEPARstart{C}{onformal} prediction (CP) \rev{provides} a framework for constructing prediction sets with guaranteed marginal coverage under an exchangeability assumption between calibration and test data~\cite{angelopoulos2024theoretical, tibshirani2019conformal}. CP has been applied in a variety of settings~\cite{11267082,11077387,10528345, cpfusion}. However, the finite-sample, distribution-free guarantees from CP rely on exchangeability~\cite{barber2023conformal}, which is often violated due to distribution shift between the source (calibration) and target (test) domains~\cite{moreno2012unifying, podkopaev2021distribution,xu2025wassersteinregularized,correia2025nonexchangeableconformalpredictionoptimal}. Under covariate shift, one approach to correct such miscoverage is to utilize weighted CP by importance weighting the calibration scores with estimated density ratios between source and target marginals on the input space~\cite{tibshirani2019conformal, xu2025wassersteinregularized}. \rev{Related weighted conformal methods have also been used for counterfactual and individual treatment-effect inference under covariate shift and strong ignorability~\cite{lei2021conformal}.} Alternatively, robust distributional approaches construct ambiguity sets around the score distribution and propagate worst-case perturbations through the conformal quantile in score space~\cite{aolaritei2025conformal}.

When target labels are unavailable, pseudo-labels from \rev{a classifier can help, but errors} can degrade coverage. \rev{Unlabeled-target heuristics rescale} scores using entropy or reconstruction loss~\cite{kasa2025adaptingpredictionsetsdistribution,Alijani_2025_CVPR}; however, these methods do not yield analytical coverage guarantees. Alternatively,~\cite{pmlr-v266-angelman25a} offers bounds for pseudo-labeled targets using score distribution distances. Since these bounds ignore the classifier and shift characteristics, they do not provide insights on designing the classifier or pseudo-calibration schemes for trading off coverage and set size.

These limitations point to a broader gap: existing CP methods under distribution shift do not account for how source-domain \rev{classification errors translate to target-domain conformal coverage}. Domain adaptation (DA) theory provides a natural lens for this question by bounding target errors in terms of source losses and distributional shift measures~\cite{kumar2020understanding,he2024gradual}.
\rev{Yet, how source-domain classifier losses and label-conditional shift control unlabeled-target pseudo-calibrated conformal coverage remains underdeveloped.}

Our work bridges this gap by drawing on DA theory to derive coverage guarantees that explicitly depend on classifier properties and shift measures. We extend these tools to multiclass classification and obtain coverage bounds for pseudo-calibration on the target domain.
%in terms of the source classifier's loss and the Wasserstein distance between source and target distributions.
\rev{Our setting differs from weighted~\cite{tibshirani2019conformal,lei2021conformal}, PAC-style~\cite{park2022pac}, conformal risk-control~\cite{angelopoulos2022conformalrisk}, doubly robust~\cite{paul2025multiply, yang2024doubly}, and coarsened-data conformal methods~\cite{pmlr-v108-park20b} in the problem considered. The covariate-shift methods among these reweight calibration scores by the feature density ratio $dQ_X/dP_X$, i.e., the Radon--Nikodym derivative of $Q_X$ with respect to $P_X$ when $Q_X\ll P_X$, which gives target coverage when the label conditional is preserved, i.e., $P_{Y\mid X}=Q_{Y\mid X}$ a.e. We instead study label-conditional shift with only unlabeled target inputs available, where the class-conditional feature distributions move and target-label scores are unavailable. We therefore use no importance weights and bound target coverage by a lower bound whose slack is controlled by the source classifier loss and the label-conditional Wasserstein shift.}
Inspired by~\cite{candes2025probably}, we further introduce a source-tuned pseudo-calibration method that interpolates between pseudo and randomized labels based on classifier uncertainty. \rev{Our theory shows that this interpolation is never less conservative than pseudo-calibration, and tightens its coverage-slack bound by a nonnegative rescued-mass term.}

Our contributions can be summarized as follows.
First, we derive coverage lower bounds for pseudo-calibrated prediction sets on the target domain in terms of the classifier’s source-domain loss, Lipschitz property, and Wasserstein measure of the distribution shift.
Second, we introduce relaxed pseudo-calibrated sets that inflate the conformal threshold by a slack parameter, \rev{derive the corresponding coverage lower bound for fixed slack values, and study a heuristic for choosing this slack in experiments.}
Finally, we propose a source-tuned pseudo-calibration method that interpolates between hard pseudo-labels and randomized labels based on classifier uncertainty. \rev{We establish a monotonicity guarantee relative to hard pseudo-calibration, an explicit improvement of its coverage lower bound, and empirically observe improved target coverage.}

\section{Background and Problem Formulation}

\paragraph{Conformal Prediction} Given calibration data $\{(X_i, Y_i)\}_{i=1}^n \stackrel{\text{i.i.d.}}{\sim} P_{XY}$ and a test point $(X_{n+1}, Y_{n+1})$, the goal is to ensure
\begin{equation}
    \Pr \{ Y_{n+1} \in C^{1-\alpha}_P(X_{n+1}) \} \ge 1 - \alpha,
    \label{eq:cpcoverage}
\end{equation}
for any $\alpha \in (0,1)$ under an exchangeability assumption between the calibration and test point, i.e., their joint distribution is invariant under permutations~\cite[Section 3]{shafer2008tutorial}. Let $s: \mathcal{X} \times \mathcal{Y} \to \mathbb{R}$ be a nonconformity score. For a distribution \rev{$P$} on $\mathcal{X} \times \mathcal{Y},$ denote the pushforward score distribution by \rev{$s\#P$}, with CDF $F_{s\#P}$. For $\alpha\in (0,1),$ define ($1-\alpha$)-quantile as $\inf\{ t \in \mathbb{R} : F_{s\#P}(t) \ge 1-\alpha \}$.
With the empirical distribution $\hat{P}_n = \frac{1}{n} \sum_{i=1}^n \delta_{(X_i, Y_i)}$, the split-conformal threshold at $\alpha$ is
\begin{equation}
\label{eq:empquantile}
q_{P,\alpha}
:=
\mathrm{Quantile} \left(\frac{\big\lceil (1-\alpha)(n+1) \big\rceil}{n}; s\#\hat{P}_n \right),
\end{equation}
and the conformal set is given by
$
C^{1-\alpha}_P(x)
=
\big\{ y \in \mathcal{Y} : s(x, y) \leq q_{P,\alpha} \big\}$.
Under exchangeability, this construction guarantees~\eqref{eq:cpcoverage}~\cite[Theorem 1.1]{angelopoulos2022gentleintroductionconformalprediction}. However, when test data are drawn from distribution $Q_{XY}$ that is different from the calibration distribution $P_{XY}$, the coverage degrades if the threshold is computed from $s\#P_{XY}$ while test scores follow~$s\#Q_{XY}$.

\rev{\paragraph{Distribution shift measure} We assume in this paper that all probability measures considered are supported on the metric space $(\mathbb{R}^d,\|\cdot\|_2)$.}
For $p \ge 1$ and probability measures $P$ and $Q$ on $\mathcal X$, the $p$-Wasserstein distance is
\[
W_p(P, Q)
:=
\left(
  \inf_{\pi \in \Pi(P, Q)}
  \mathbb{E}_{(X,X') \sim \pi}\big[\|X - X'\|_2^p\big]
\right)^{1/p},
\]
where $\Pi(P, Q)$ is the set of all couplings of $P,Q$.
For $p = \infty$, let
$
W_\infty(P, Q)
:=
\inf_{\pi \in \Pi(P, Q)}
\operatorname*{ess\,sup}_{(X,X')\sim\pi}\|X - X'\|_2.
$

\paragraph{Problem Considered}
We consider a multiclass classification setting with input space ${\mathcal{X} = \mathbb{R}^d}$ and label space $\mathcal{Y} = [K] := \{1, \dots, K\}$. The source and target domains are represented by joint distributions $P_{XY}$ and $Q_{XY}$ over $\mathcal{X} \times \mathcal{Y}$. A classifier $f : \mathcal{X} \to \mathcal{Y}$ is induced by a logit map $M_f : \mathcal{X} \to \mathbb{R}^K$, which returns the vector of class logits. The predicted label is given by
% \vspace{-0.7em}
\begin{align}\label{eq:f_def}
f(x) := \arg\max_{k \in [K]} M_f(x)_k.
\end{align}
For a labeled example $(x,y)$, define the multiclass margin that measures how much the logit of the true class exceeds the largest competing logit as
% \[
$\gamma_f(x, y) := M_f(x)_y - \max_{k \neq y} M_f(x)_k.$
% \]
To bound errors under distribution shift, we employ the ramp loss defined as $\ell_r((x, y); f) := r(\gamma_f(x, y)),$ 
where $r(t) := \min\{\max(1 - t, 0), 1\}$ is the ramp function which clips the surrogate loss to the interval $[0,1]$. The population ramp loss under distribution $P_{XY}$ is 
$L_r(f, P) := \mathbb{E}_{P_{XY}} [\ell_r((X,Y); f)].$
We will also use the hinge loss $\ell_h((x,y);f) := \max\{1-\gamma_f(x,y),\,0\}$, with population hinge loss under $P_{XY}$ given by
$L_h(f,P) := \mathbb{E}_{(X,Y)\sim P_{XY}}\!\big[\ell_h((X,Y);f)\big]$.

\begin{assumption}
For all $x, x' \in \mathcal{X}$ and $y\in[K]$, $\gamma_f(\cdot, \cdot)$ satisfies 
$
|\gamma_f(x, y) - \gamma_f(x', y)| \leq L_\gamma \|x - x'\|_2.
$
\label{assump:boundlip}
\end{assumption}

In our setting, $f$ is a pre-trained classifier with known \rev{or estimated $L_r(f,P)$ on held-out source validation data.}
Once $f$ is fixed, $L_\gamma$ can be upper-bounded using spectral norm bounds, or \rev{estimated via a data-dependent local gradient-norm bound around observed source samples.}
Only unlabeled target inputs $X \sim Q_X$ are available. Therefore, form deterministic pseudo-labels $\tilde Y := f(X)$, inducing a pseudo-labeled joint distribution $\tilde Q_{XY}$ and an associated score distribution $s\#\tilde Q_{XY}$. Throughout this paper, we use the nonconformity score $s(x,y) := -\gamma_f(x,y)$. 
By definition of $f$ in~\eqref{eq:f_def}, for any $x \in \mathcal X$ and $y \neq f(x),$ we have $\gamma_f(x,f(x)) \ge 0$ and $\gamma_f(x,y) \le 0.$ Hence, 
$s(x,f(x)) \le  s(x,y)$.
Thus, under pseudo-labeling, the score for the predicted label is always less than or equal to the score for any other label at the same $x$. This, in turn, implies that for $(X,Y)\sim Q_{XY}$, we have $s(X,f(X)) \le s(X,Y)$ almost surely. Consequently,
\begin{equation}
\label{eq:stochdominance}
F_{s\#\tilde Q}(t) \ge F_{s\#Q}(t),
\end{equation}
holds for all $t$.
\rev{Equivalently, the pseudo-score distribution has no larger distributional quantiles than the true target-score distribution.
Intuitively, pseudo-calibration trusts the classifier's predictions as labels. For the score $s(x,y)=-\gamma_f(x,y)$ used in this paper, the predicted label has the smallest score, it uses smaller thresholds than oracle calibration, and hence, it can lead to undercoverage under distribution shift.}
We also make the following assumption.
\begin{assumption}
\label{assump:shift}
The distributions $P_{XY}$ and $Q_{XY}$ satisfy:
\begin{enumerate}[$(i)$]
    \item Identical Label Marginals: $P_Y = Q_Y$. 
    \item Bounded Conditional Shift: For some $\rho > 0$, we have $\sup_{y \in \mathcal{Y}} W_\infty(P_{X|y}, Q_{X|y}) < \rho.$ 
\end{enumerate}
\end{assumption}
\rev{Assumption~\ref{assump:shift}(i) is standard in domain adaptation analyses to isolate label-conditional covariate shift (e.g.~\cite{kumar2020understanding}).}
Assumption~\ref{assump:shift}(ii) is natural in sensing/control pipelines where perturbations are physically constrained. In such cases, $\rho$ is treated as an a priori parameter (e.g., from known environment/sensor dynamics such as bounded drift/noise).
Under Assumption~\ref{assump:shift} we also have
$W_1(P_{X\mid y}, Q_{X\mid y}) \le \rho$ for all $y$, and hence
$W_1(P_X, Q_X) \le \rho_{\mathrm{mix}} := \sum_{y=1}^K P_Y(y)\,W_1(P_{X\mid y}, Q_{X\mid y}) \le \rho$.

\vspace{-0.4em}

\section{Analytical Results}
We now present upper bounds on the coverage gap under distribution shift; all proofs are deferred to the supplementary material. 
\rev{Note that $\rho$ and $L_\gamma$ appear only in the following bounds and are not required by the proposed procedures.}
By Assumption~\ref{assump:boundlip}, the score function $s(x,y)$ is $L_\gamma$-Lipschitz in $x$ for every $y \in [K]$. For a given level $\alpha \in (0,1)$, let $q_{P,\alpha}$ denote the \rev{empirical split-conformal threshold computed from $s\#\hat{P}_n$ in}~\eqref{eq:empquantile}. Using this threshold, the achieved coverage with the target distribution $Q$ is $F_{s\#Q}(q_{P,\alpha})$, while the coverage with $P$ is $F_{s\#P}(q_{P,\alpha}) \approx 1 - \alpha$. 
Define the pointwise coverage gap
% \vspace{-0.5em}
$\Delta_{P,Q}(\alpha)
:= \big|F_{s\#P}(q_{P,\alpha}) - F_{s\#Q}(q_{P,\alpha})\big|$.
Following \cite{correia2025nonexchangeableconformalpredictionoptimal}, aggregating the discrepancies across $\alpha$ via
$\Delta_{P,Q} := \int_0^1 \Delta_{P,Q}(\alpha) \, d\alpha$ measures the average coverage mismatch when calibrating on $P$ but deploying on $Q$. 

\vspace{-0.4em}

\subsection{Coverage Gap Upper Bounds under Distribution Shift}
Our first result bounds the Wasserstein distance between the original and shifted \rev{score} distributions.
\begin{lemma}
\label{lemma:W1upperbounds}
Under Assumptions~\ref{assump:boundlip} and~\ref{assump:shift}, we have
$W_1(s\#P,s\#Q) \le L_\gamma \rho$.
\end{lemma}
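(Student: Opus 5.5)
We will prove the bound by building an explicit coupling of $P_{XY}$ and $Q_{XY}$ that keeps the label fixed and couples the features within each class. The Lipschitz property of the score in $x$ (Assumption~\ref{assump:boundlip}) then carries this coupling into score space.

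\emph{Step 1: class-wise couplings.} Fix a class $y\in[K]$. By Assumption~\ref{assump:shift}(ii), $W_\infty(P_{X|y},Q_{X|y})<\rho$. Hence, by definition of the infimum, there is a coupling $\pi_y\in\Pi(P_{X|y},Q_{X|y})$ with $\|X-X'\|_2\le\rho$ $\pi_y$-almost surely. Since $\mathcal{Y}=[K]$ is finite, selecting one such $\pi_y$ per class raises no measurability issue. If one prefers the weaker $W_1$ hypothesis, one can instead take $\pi_y$ optimal for $W_1$; an optimal coupling exists on $\mathbb{R}^d$ with finite first moments, and otherwise an $\varepsilon$-optimal coupling followed by $\varepsilon\to0$ suffices.

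\emph{Step 2: a joint coupling.} Using $P_Y=Q_Y$ from Assumption~\ref{assump:shift}(i), define a law $\Gamma$ on $(\mathcal X\times\mathcal Y)^2$ as follows: draw $Y\sim P_Y$, then draw $(X,X')\sim\pi_Y$, and output $((X,Y),(X',Y))$. The first marginal is $P_Y\otimes P_{X|Y}=P_{XY}$. The second marginal is $Q_Y\otimes Q_{X|Y}=Q_{XY}$. So $\Gamma\in\Pi(P_{XY},Q_{XY})$. Pushing $\Gamma$ forward by $(s,s)$ gives a coupling of $s\#P$ and $s\#Q$.

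\emph{Step 3: bound the transport cost.} Since $s=-\gamma_f$, Assumption~\ref{assump:boundlip} gives, for each label,
\[
|s(X,Y)-s(X',Y)|\le L_\gamma\|X-X'\|_2 .
\]
Therefore
\[
W_1(s\#P,s\#Q)\le \mathbb{E}_\Gamma|s(X,Y)-s(X',Y)|\le L_\gamma\sum_y P_Y(y)\,\mathbb{E}_{\pi_y}\|X-X'\|_2\le L_\gamma\rho .
\]
With $W_1$-optimal $\pi_y$ this actually yields the sharper bound $L_\gamma\rho_{\mathrm{mix}}$.

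The only delicate point is Step 2. Both the shared label and the matching label marginals are essential: without $P_Y=Q_Y$, the coupling would have to move mass across labels. The Lipschitz bound does not control $|s(x,y)-s(x',y')|$ when $y\ne y'$, so such cross-label transport could not be bounded. Because of Assumption~\ref{assump:shift}(i), this case never arises, and the remaining steps are routine.
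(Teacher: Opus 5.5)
Your proof is correct and follows the paper's argument: you build class-wise couplings, glue them along a shared label using $P_Y=Q_Y$, and use the Lipschitz property of $s$ in $x$ to carry the transport cost into score space. The differences are cosmetic: you use the strict inequality $W_\infty<\rho$ to pick a coupling with $\|X-X'\|_2\le\rho$ a.s. directly, where the paper takes a $(\rho+\epsilon)$-coupling and sends $\epsilon\to0$, and you additionally note the sharper $L_\gamma\rho_{\mathrm{mix}}$ bound obtained from $W_1$-optimal couplings.
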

For source calibration using labeled data from $P_{XY}$, we invoke the general coverage-gap bound of \cite[Theorem~3.2]{correia2025nonexchangeableconformalpredictionoptimal}:
\begin{align}
\Delta_{P,Q}
\le
\Big(\sup_{t \in \mathbb{R}} \, p_{s\#P}(t)\Big)\,
W_1(s\#P, s\#Q),
\label{eq:covgapPQ}
\end{align}
where $p_{s\#P}$ denotes the PDF of $s\#P$ (when it exists).
Combining~\eqref{eq:covgapPQ} with Lemma~\ref{lemma:W1upperbounds} yields $\Delta_{P,Q}
\le
(\sup_{t\in\mathbb{R}}  p_{s\#P}(t))L_\gamma \rho,$
which quantifies the worst-case coverage degradation from source calibration via the score density, Lipschitz constant, and shift magnitude.
For pseudo-calibration on $Q$, scores follow $s\#\tilde Q_{XY}$ while test scores follow $s\#Q_{XY}$.
We have the following result.
\begin{theorem}
\label{thm:pseudo-calibration}
Let Assumptions~\ref{assump:boundlip} and~\ref{assump:shift} hold.
\rev{Let $X_1,\ldots,X_n\stackrel{\rm i.i.d.}{\sim}Q_X$ and let $(X_{n+1},Y_{n+1})\sim Q_{XY}$ be independent. Let $\mathbb P_Q$ denote the joint law
of $X_{1:n}$ and $(X_{n+1},Y_{n+1})$.}
Define
$\tilde Y_i := f(X_i)$, so that $(X_i,\tilde Y_i)\stackrel{\text{i.i.d.}}{\sim}\tilde Q_{XY}$.
Let $q_{\tilde Q,\alpha}$ be the split-conformal threshold computed from $\{s(X_i,\tilde Y_i)\}_{i=1}^n$, and define the pseudo-calibrated set
\begin{align}
C^{1-\alpha}_{\tilde Q}(X_{n+1})
:=
\big\{ y \in [K] : s(X_{n+1},y) \le q_{\tilde Q,\alpha} \big\}.\nonumber
\end{align}
Then the marginal coverage on the target domain satisfies
{\small
\begin{equation}
\label{eq:pseudo_cov_lb_ramp}
\rev{\mathbb P_Q}\!\big( Y_{n+1} \in C_{\tilde Q}^{1-\alpha}(X_{n+1}) \big)
\ge
1-\alpha - L_r(f,P) - L_\gamma\rho_{\mathrm{mix}},
\end{equation}}\rev{with the right-hand side clipped at 0 when it becomes negative. 
Moreover, $\mathbb P_Q\!\big(Y_{n+1}\in C^{1-\alpha}_{\tilde Q}(X_{n+1})\big)\ge 1-\alpha-L_r(f,Q)$, where $L_r(f,Q)\le L_r(f,P)+L_\gamma\rho_{\mathrm{mix}}$.}
\end{theorem}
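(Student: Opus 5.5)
The argument splits into two steps: a coverage bound under pseudo-calibration in terms of the target misclassification rate, followed by a domain-adaptation bound transferring that rate to the source ramp loss.

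\emph{Step 1 (pseudo-calibration coverage).} The pseudo-scores $s(X_i,\tilde Y_i)$, $i=1,\dots,n$, are i.i.d.\ from $s\#\tilde Q$. Let $(X_{n+1},\tilde Y_{n+1})$ with $\tilde Y_{n+1}:=f(X_{n+1})$ be the pseudo-labeled test point; its score is exchangeable with the calibration scores. Standard split-conformal exchangeability \cite[Theorem 1.1]{angelopoulos2022gentleintroductionconformalprediction} then gives
\[
\mathbb P_Q\big(s(X_{n+1},f(X_{n+1}))\le q_{\tilde Q,\alpha}\big)\ge 1-\alpha .
\]
On the event $\{Y_{n+1}=f(X_{n+1})\}$ this event coincides with $\{Y_{n+1}\in C^{1-\alpha}_{\tilde Q}(X_{n+1})\}$. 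A union bound therefore yields
\[
\mathbb P_Q\big(Y_{n+1}\in C^{1-\alpha}_{\tilde Q}(X_{n+1})\big)\ge 1-\alpha-\mathbb P_Q\big(f(X_{n+1})\ne Y_{n+1}\big).
\]
If $f(x)\ne y$, then $\gamma_f(x,y)\le 0$, so $r(\gamma_f)=1$. Hence the $0$-$1$ error is at most $L_r(f,Q)$, which proves the second claim $\ge 1-\alpha-L_r(f,Q)$. Ties in the argmax are handled by this same inequality, since a tie also gives $\gamma_f\le 0$.

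\emph{Step 2 (transfer of the ramp loss).} Next I show $L_r(f,Q)\le L_r(f,P)+L_\gamma\rho_{\mathrm{mix}}$. By Assumption~\ref{assump:shift}(i),
\[
L_r(f,Q)-L_r(f,P)=\sum_y P_Y(y)\Big(\mathbb E_{Q_{X|y}}[r(\gamma_f(X,y))]-\mathbb E_{P_{X|y}}[r(\gamma_f(X,y))]\Big).
\]
The ramp function $r$ is $1$-Lipschitz, and by Assumption~\ref{assump:boundlip} the map $x\mapsto\gamma_f(x,y)$ is $L_\gamma$-Lipschitz. So for each $y$ the map $x\mapsto r(\gamma_f(x,y))$ is $L_\gamma$-Lipschitz. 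Kantorovich--Rubinstein duality then bounds each bracket by $L_\gamma W_1(P_{X|y},Q_{X|y})$. This holds for any coupling, so no optimal coupling is needed, and $W_1$ is finite because $W_\infty<\rho$. Summing with weights $P_Y(y)$ gives exactly $L_\gamma\rho_{\mathrm{mix}}$.

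Combining Steps 1 and 2 gives \eqref{eq:pseudo_cov_lb_ramp}. Clipping at $0$ is trivial because probabilities are nonnegative.

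The main obstacle is Step 1's exchangeability claim. I need to check that the pseudo-labeled test score has the same law as the calibration scores; it does, because $X_{n+1}\sim Q_X$ independently of $X_{1:n}$. I also need the threshold convention in \eqref{eq:empquantile} to yield $\ge 1-\alpha$ (taking $q=\infty$ if $\lceil(1-\alpha)(n+1)\rceil>n$). Beyond this, the argument is routine.
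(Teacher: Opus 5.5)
Your proposal is correct and follows essentially the same route as the paper. Both arguments use the same chain of steps:
\begin{itemize}
\item split-conformal validity for the pseudo-labeled test score $s(X_{n+1},f(X_{n+1}))$;
\item a union bound that reduces miscoverage to the event $\{Y_{n+1}\neq f(X_{n+1})\}$ (the paper writes this as $\{S-\tilde S>0\}$ and carries a general slack $\tau$ so it can reuse the step for Corollary~\ref{cor:hinge_tau});
\item domination of that event by the ramp loss, since $\gamma_f\le 0$ there;
\item the per-class Kantorovich--Rubinstein transfer weighted by $P_Y=Q_Y$, giving $L_r(f,Q)\le L_r(f,P)+L_\gamma\rho_{\mathrm{mix}}$.
\end{itemize}
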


\rev{Note that when $P_Y \neq Q_Y$, the result can be relaxed with an additional total variation distance term between the marginals.}
This result shows that the hard pseudo-calibration method gives coverage guarantees controlled by the source ramp loss and the shift magnitude.
\rev{We use hard pseudo-calibration as a reference construction because it is the simplest unlabeled-target baseline directly analyzed by Theorem~\ref{thm:pseudo-calibration}. Corollary~\ref{cor:hinge_tau} extends the analysis to the fixed-threshold-inflated set, while Theorem~\ref{thm:relative_coverage} gives a separate monotonicity and rescued-mass guarantee for source-tuned pseudo-calibration.}

\begin{corollary}
\label{cor:hinge_tau}
Under the setup of Theorem~\ref{thm:pseudo-calibration}, for any fixed $\tau\ge0$, define the relaxed prediction set
\[
C^{1-\alpha}_{\tilde Q, \tau}(X_{n+1})
:=
\big\{ y \in [K] : s(X_{n+1},y) \le q_{\tilde Q,\alpha} + \tau \big\}.
\]
Then the marginal coverage on the target domain satisfies
{\small \begin{equation}
\label{eq:pseudo_cov_lb_hinge}
\rev{\mathbb P_Q}\!\big( Y_{n+1} \in C^{1-\alpha}_{\tilde Q,\tau}(X_{n+1}) \big)
\ge
1-\alpha - \min \left\{L_r(f,Q), \frac{L_h(f,Q)}{1+\tau/2}\right\}.
\end{equation}}%which is informative when $L_h(f,Q) \le (1-\alpha)(1+\tau/2)$.
\end{corollary}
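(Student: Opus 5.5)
Since $\tau\ge 0$, the relaxed set contains the pseudo-calibrated set: $C^{1-\alpha}_{\tilde Q}(X_{n+1})\subseteq C^{1-\alpha}_{\tilde Q,\tau}(X_{n+1})$. By the second part of Theorem~\ref{thm:pseudo-calibration}, this gives a lower bound of $1-\alpha-L_r(f,Q)$. So it is enough to prove the second branch,
\[
\mathbb P_Q\big(Y_{n+1}\in C^{1-\alpha}_{\tilde Q,\tau}(X_{n+1})\big)\ge 1-\alpha-\frac{L_h(f,Q)}{1+\tau/2},
\]
and then take the larger of the two bounds, which subtracts the minimum.

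For the second branch, write $(X,Y)=(X_{n+1},Y_{n+1})$ and use the decomposition behind Theorem~\ref{thm:pseudo-calibration}. The pseudo-scores $s(X_i,f(X_i))$ and $s(X,f(X))$ are i.i.d. under $\tilde Q$, so by split-conformal exchangeability $\mathbb P_Q(s(X,f(X))\le q_{\tilde Q,\alpha})\ge 1-\alpha$. Suppose this event holds and, in addition, $s(X,Y)\le s(X,f(X))+\tau$. Then $s(X,Y)\le q_{\tilde Q,\alpha}+\tau$, so $Y$ is covered. A union bound therefore gives
\[
\mathbb P_Q(Y\in C^{1-\alpha}_{\tilde Q,\tau}(X))\ge 1-\alpha-\mathbb P_Q\big(s(X,Y)-s(X,f(X))>\tau\big).
\]
Note that $s(X,Y)-s(X,f(X))=\gamma_f(X,f(X))-\gamma_f(X,Y)$. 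If $Y=f(X)$ this difference is $0$, so the event is contained in $\{Y\neq f(X)\}$.

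The main step is to bound this failure probability by $L_h(f,Q)/(1+\tau/2)$ through a pointwise Markov-type inequality on the hinge loss: on the failure event, show $\ell_h((X,Y);f)\ge 1+\tau/2$. Let $Y\neq f(X)$, and let $a=M_f(X)_{f(X)}$ be the top logit and $b=M_f(X)_Y$. Then $\gamma_f(X,Y)=b-a\le 0$. Let $c$ be the largest logit among classes other than $f(X)$, so $c\ge b$ and $\gamma_f(X,f(X))=a-c\le a-b$. The difference of scores is then $(a-c)-(b-a)\le 2(a-b)$. Hence on the failure event $2(a-b)>\tau$, i.e.\ $-\gamma_f(X,Y)>\tau/2$, which gives $\ell_h((X,Y);f)=1-\gamma_f(X,Y)>1+\tau/2$. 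This factor $2$ explains the $\tau/2$ in the statement. Consequently $\mathbf 1\{\text{fail}\}\le \ell_h((X,Y);f)/(1+\tau/2)$ pointwise, since $\ell_h\ge 0$. Taking expectations over $(X,Y)\sim Q_{XY}$ bounds the failure probability by $L_h(f,Q)/(1+\tau/2)$.

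Combining the two branches yields \eqref{eq:pseudo_cov_lb_hinge}. The step needing the most care is the logit comparison. One must check that $\gamma_f(X,f(X))$ involves the runner-up logit $c$ rather than $b$, so that the bound $2(a-b)$ is valid even when ties occur in the $\arg\max$. One must also confirm that the conformal step uses only the i.i.d.\ structure of $(X_i,f(X_i))$ together with $X_{n+1}$, with independence from $Y_{n+1}$ entering only through the union bound.
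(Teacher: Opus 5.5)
Your proposal is correct and follows essentially the paper's argument: split-conformal validity for the pseudo-score plus a union bound gives $\mathbb P_Q(S_{n+1}>q_{\tilde Q,\alpha}+\tau)\le\alpha+Q(S-\tilde S>\tau)$. The factor-2 logit comparison then forces $\gamma_f(X,Y)\le-\tau/2$ on the failure event, and a Markov-type bound with the hinge loss finishes the hinge branch.

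The only cosmetic difference is how you get the $L_r(f,Q)$ branch. You use the inclusion $C^{1-\alpha}_{\tilde Q}\subseteq C^{1-\alpha}_{\tilde Q,\tau}$ together with Theorem~\ref{thm:pseudo-calibration}. The paper instead reuses the same failure-event chain, via $Q(\gamma_f(X,Y)\le-\tau/2)\le Q(\gamma_f(X,Y)\le 0)\le L_r(f,Q)$.
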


% \rev{Thus inflating the threshold by a larger fixed $\tau$ improves the coverage lower bound.}

\begin{table*}[t]
\centering
\begingroup
\setlength{\abovecaptionskip}{4pt}
\setlength{\belowcaptionskip}{2pt}
\setlength{\tabcolsep}{2.0pt}
\renewcommand{\arraystretch}{0.8}
\scriptsize
\caption{Coverage and ESS versus representative shift levels $\sigma$ at nominal $1-\alpha=0.8$, averaged over 5 runs. Boldface marks the least conservative non-oracle method meeting the nominal coverage target, or the closest method to $1-\alpha$ if none meets it.}

\label{tab:all_cov_ess}

\resizebox{0.9\textwidth}{!}{%
\begin{tabular}{@{}c|c|cc|cc|cc|cc|cc|cc@{}}
\hline
Dataset & $\sigma$
& \multicolumn{2}{c|}{Source Cal}
& \multicolumn{2}{c|}{Hard Pseudo Cal}
& \multicolumn{2}{c|}{Source-tuned Cal}
& \multicolumn{2}{c|}{ECP}
& \multicolumn{2}{c|}{WQLCP}
& \multicolumn{2}{c}{Target Cal} \\
& 
& Cov (\%) & ESS
& Cov (\%) & ESS
& Cov (\%) & ESS
& Cov (\%) & ESS
& Cov (\%) & ESS
& Cov (\%) & ESS \\
\hline
\multirow{3}{*}{MNIST}
& 0.7
& 72.12 & 0.73
& 78.36 & 0.80
& 90.14 & 0.97
& \textbf{86.37} & \textbf{0.90}
& 77.25 & 0.79
& 79.76 & 0.81 \\

& 1.6
& 37.21 & 0.45
& 51.34 & 0.80
& \textbf{75.72} & \textbf{2.65}
& 53.67 & 0.90
& 51.86 & 0.82
& 80.39 & 3.18 \\

& 2.0
& 23.92 & 0.41
& 33.41 & 0.80
& \textbf{52.54} & \textbf{2.27}
& 35.20 & 0.89
& 34.87 & 0.88
& 80.28 & 5.52 \\
\hline

\multirow{3}{*}{CIFAR-10}
& 0.3
& 75.93 & 0.91
& 70.02 & 0.79
& \textbf{90.08} & \textbf{1.46}
& 70.52 & 0.80
& 70.12 & 0.80
& 80.18 & 1.02 \\

& 0.9
& 51.53 & 0.86
& 49.46 & 0.80
& \textbf{90.88} & \textbf{3.84}
& 51.14 & 0.84
& 51.10 & 0.84
& 79.83 & 2.34 \\

& 1.5
& 29.95 & 0.83
& 29.11 & 0.79
& \textbf{78.42} & \textbf{4.84}
& 30.31 & 0.84
& 31.41 & 0.89
& 80.02 & 5.06 \\
\hline

\multirow{3}{*}{CIFAR-100}
& 0.3
& 75.36 & 4.52
& 41.98 & 0.80
& \textbf{93.07} & \textbf{15.23}
& 44.79 & 0.93
& 42.05 & 0.81
& 79.57 & 5.76 \\

& 0.5
& 66.63 & 4.95
& 33.23 & 0.80
& \textbf{91.54} & \textbf{22.08}
& 35.79 & 0.93
& 33.50 & 0.81
& 80.49 & 10.34 \\

& 0.7
& 55.95 & 5.01
& 26.35 & 0.81
& \textbf{86.14} & \textbf{24.56}
& 28.23 & 0.94
& 26.74 & 0.83
& 79.66 & 17.05 \\
\hline
\end{tabular}%
}
\endgroup
\end{table*}

\vspace{-1.0em}

\subsection{Source-Tuned Pseudo-Calibration}
\label{subsec:pac_labels}

From~\eqref{eq:stochdominance}, pseudo-labeling is pessimistic in score space. It tends to produce smaller quantile thresholds and smaller prediction sets, often resulting in undercoverage relative to calibration with true labels.
\rev{To mitigate this pessimism, we keep the pseudo-label where the classifier is confident and randomize where it is uncertain. Since random labels can carry larger scores, this lifts the threshold and enlarges sets where the pseudo-label is least reliable.}

Given a function $H:\mathcal X\to \mathbb{R}_{\ge 0}$ that measures some notion of uncertainty (e.g., predictive entropy), we rely on pseudo-labels when $H(x)$ is small and randomize otherwise. Given a threshold $u \in \mathcal U$, define, for $x\in\mathcal X$, the quantity
\begin{align}\label{eq:randompseudolabel} 
    \tilde Y_u(x) = f(x) \mathbbm{1}\{H(x)\leq u\} + U \mathbbm{1}\{H(x)> u\},
\end{align}
where $U \sim \text{Unif}([K])$ and $\mathbbm{1}\{\cdot\}$ is the indicator function. Let $\tilde P^u_{XY}$ and $\tilde Q^u_{XY}$ be the randomized pseudo-labeled source and target distributions induced by $\tilde Y_u$, with score distributions $s\#\tilde P^u$ and $s\#\tilde Q^u$. Since $f(x)$ minimizes $s(x,y)$ over $y\in[K]$, randomization can only increase the scores. Thus, for every realization of $x$ and $\tilde Y_u(x)$, we have $s(x,\tilde Y_u(x)) \ge s(x,f(x))$. Consequently, at fixed nominal level, the threshold computed from \rev{$\tilde Q^u_{XY}$} is never smaller than under \rev{$\tilde{Q}_{XY}$}, and the prediction sets are never less conservative.

We tune $u$ on labeled source data. For each $u$ in the grid $\mathcal{U}$, we compute the threshold using the mixed pseudo-labeled scores $\{s(X_i^P,\tilde Y_u(X_i^P))\}_{i=1}^m$. We then select $u^\star$ such that the empirical source coverage $\hat c(u)$ stays above $1-\alpha$. With this $u^\star$ fixed, we pseudo-label the target samples $\{X_j^{Q}\}_{j=1}^n$ and compute the final threshold from $\{s(X_j^{Q},\tilde Y_{u^\star}(X_j^{Q}))\}_{j=1}^n$. The procedure is summarized in Algorithm~\ref{alg:source-tuned-pseudo}.

\begin{algorithm}[t]
\caption{Source-Tuned Pseudo-Calibration (STPC)}
\label{alg:source-tuned-pseudo}
\begin{algorithmic}[1]\small
\rev{
\REQUIRE Source data $\{(X_i^P,Y_i^P)\}_{i=1}^m$, target inputs $\{X_j^Q\}_{j=1}^n$, classifier $f$, uncertainty $H$, score $s$, level $\alpha$, grid $\mathcal U$.
\STATE For each $u\in\mathcal U$, form $\tilde Y_u$ by~\eqref{eq:randompseudolabel}, compute source pseudo-scores $S_i^u=s(X_i^P,\tilde Y_u(X_i^P))$, and let $q_{\tilde P^u,\alpha}$ be their split-conformal quantile.
\STATE Compute $\hat c(u)=m^{-1}\sum_{i=1}^m\mathbbm{1}\{s(X_i^P,Y_i^P)\le q_{\tilde P^u,\alpha}\}$ and set
$u^\star=\max\{u\in\mathcal U:\hat c(u)\ge 1-\alpha\}$.
\STATE Compute target pseudo-scores $S_j^{Q,u^\star}=s(X_j^Q,\tilde Y_{u^\star}(X_j^Q))$ and return their split-conformal quantile $q_{\tilde Q^{u^\star},\alpha}$.}
\end{algorithmic}
\end{algorithm}

\rev{The next result establishes monotonicity relative to hard pseudo-calibration and quantifies the improvement in coverage.
Let $\nu_K$ denote the uniform distribution on $[K]$.
For fixed $u$, let $q_{\tilde Q^u,\alpha}$ be the split-conformal threshold computed from $\mathcal D_Q^{\rm cal}=\{X_i\}_{i=1}^n$ and the auxiliary random labels $U_{1:n}$.
Define
$r_u(t):=(Q_X\otimes \nu_K)
\{(x,v):H(x)>u,\ s(x,f(x)) \le t<s(x,v)\}$,
and
$R_u:=\mathbb E_{\mathcal D_Q^{\rm cal}, U_{1:n}}
\left[r_u(q_{\tilde Q^u,\alpha})\right] \ge 0.$
Let $\mathbb P_{Q,U}$ denote the joint law of $\mathcal D_Q^{\rm cal}$, $U_{1:n}$, and $(X,Y)\sim Q_{XY}$.}

\begin{theorem}
\label{thm:relative_coverage}
\rev{Let Assumptions~\ref{assump:boundlip} and~\ref{assump:shift} hold, fix $u$. Then
\begin{enumerate}[$(i)$]
\item $\mathbb P_{Q,U}\big(Y\in C^{1-\alpha}_{\tilde Q^u}(X)\big) \ge \mathbb P_{Q,U}\big(Y\in C^{1-\alpha}_{\tilde Q}(X)\big)$.
\item $\mathbb P_{Q,U}\big(Y\in C^{1-\alpha}_{\tilde Q^u}(X)\big) \ge 1-\alpha-L_r(f,Q)+R_u .$
\end{enumerate}
% Probabilities involving data-dependent sets denote marginal target coverage: the test point has law $Q_{XY}$, and the randomness in $\mathcal D_Q^{\rm cal}$ and $U_{1:n}$ is averaged over.
}
\end{theorem}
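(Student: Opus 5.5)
Part $(i)$ follows from a pointwise comparison of calibration scores. Couple the two procedures on the same draw of $\mathcal D_Q^{\rm cal}$, $U_{1:n}$ and test point $(X,Y)$. For each $i$, since $f(X_i)$ minimizes $s(X_i,\cdot)$ over $[K]$, we have $S_i^u:=s(X_i,\tilde Y_u(X_i))\ge s(X_i,f(X_i))=:S_i$. The split-conformal threshold in~\eqref{eq:empquantile} is an order statistic of the calibration scores, namely the $\lceil(1-\alpha)(n+1)\rceil$-th smallest. Order statistics are coordinatewise monotone, so $q_{\tilde Q^u,\alpha}\ge q_{\tilde Q,\alpha}$ holds surely. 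Consequently $C^{1-\alpha}_{\tilde Q}(X)\subseteq C^{1-\alpha}_{\tilde Q^u}(X)$ on every realization. Taking expectations under $\mathbb P_{Q,U}$ gives $(i)$.

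For $(ii)$, I write $q^u:=q_{\tilde Q^u,\alpha}$ and condition on $(\mathcal D_Q^{\rm cal},U_{1:n})$, so that $q^u$ is fixed and independent of $(X,Y)\sim Q_{XY}$. The conditional coverage is then $F_{s\#Q}(q^u)$. On $\{s(X,f(X))\le q^u\}$, the event $s(X,Y)>q^u$ forces $Y\ne f(X)$, which means the margin satisfies $\gamma_f(X,Y)\le0$ and hence $\ell_r=1$. This gives the pointwise bound
\[
\mathbbm 1\{s(X,Y)\le q^u\}\ \ge\ \mathbbm 1\{s(X,f(X))\le q^u\}-\ell_r((X,Y);f),
\]
and therefore $F_{s\#Q}(q^u)\ge F_{s\#\tilde Q}(q^u)-L_r(f,Q)$. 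This is the same step I expect underlies the ``moreover'' clause of Theorem~\ref{thm:pseudo-calibration}, now applied at the threshold $q^u$.

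The key step is to relate $F_{s\#\tilde Q}(q^u)$ to $F_{s\#\tilde Q^u}(q^u)$. Let $V\sim\nu_K$ be independent of $X$. The event $\{s(X,f(X))\le t\}$ is the disjoint union of $\{s(X,\tilde Y_u(X))\le t\}$ and the set of points that are covered under hard pseudo-labels but lost under randomization. On $\{H\le u\}$ the labels $\tilde Y_u(X)$ and $f(X)$ coincide, so nothing is lost there. On $\{H>u\}$, the monotonicity $s(x,v)\ge s(x,f(x))$ shows that the lost set is exactly $\{s(X,f(X))\le t<s(X,V)\}$. This yields the identity
\[
F_{s\#\tilde Q}(t)=F_{s\#\tilde Q^u}(t)+r_u(t).
\]
Applying it at $t=q^u$ gives the conditional bound $F_{s\#Q}(q^u)\ge F_{s\#\tilde Q^u}(q^u)+r_u(q^u)-L_r(f,Q)$.

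It remains to take expectations over $(\mathcal D_Q^{\rm cal},U_{1:n})$. The scores $s(X_i,\tilde Y_u(X_i))$ are i.i.d. draws from $s\#\tilde Q^u$, with the $U_i$ independent of everything else. By the standard split-conformal argument, exchangeability of a fresh $\tilde Q^u$ draw with the calibration scores gives $\mathbb E[F_{s\#\tilde Q^u}(q^u)]\ge 1-\alpha$. Combining this with the definition of $R_u$ proves $(ii)$. I expect the main obstacle to be bookkeeping rather than substance: making sure the randomization in the test-side quantity $r_u$ uses a fresh $V$ independent of the calibration labels $U_{1:n}$, so that the conditional argument is valid. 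Assumptions~\ref{assump:boundlip} and~\ref{assump:shift} are then needed only if one further bounds $L_r(f,Q)\le L_r(f,P)+L_\gamma\rho_{\rm mix}$, as in Theorem~\ref{thm:pseudo-calibration}.
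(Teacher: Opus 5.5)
Your proposal is correct and follows essentially the same route as the paper: part $(i)$ via the sure domination $S_i^u\ge \tilde S_i$ and monotonicity of the empirical quantile. Part $(ii)$ also matches, using the identity $F_{s\#\tilde Q}(t)-F_{s\#\tilde Q^u}(t)=r_u(t)$ together with $F_{s\#\tilde Q}(t)-F_{s\#Q}(t)\le L_r(f,Q)$, evaluated at $t=q_{\tilde Q^u,\alpha}$ and averaged using split-conformal validity; your remark that Assumptions~\ref{assump:boundlip} and~\ref{assump:shift} are only needed to further bound $L_r(f,Q)$ by source quantities agrees with the paper's argument, which never invokes them.
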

\rev{Part~$(i)$ guarantees that source-tuned coverage is never below hard pseudo-calibration. Part~$(ii)$ improves the hard pseudo-calibration lower-bound floor.
For the data-chosen value $u^\star$ in Algorithm~\ref{alg:source-tuned-pseudo}, the same result holds because $u^\star$ is selected using an independent source-tuning set.}

\vspace{-0.4em}

\section{Numerical Experiments}

We evaluate MNIST, CIFAR-10, and CIFAR-100. The source distribution $P$ is the original dataset, and $Q_\sigma$ is obtained by applying a stochastic image transform consisting of an appearance change and clipped Gaussian noise of strength $\sigma$. We train $f$ only on source-domain data, use split CP with $\alpha=0.2$, and take predictive entropy as $H$. For reproducibility, MNIST uses an autoencoder-based MLP trained for 30 epochs using SGD with learning rate (LR) $10^{-4}$. CIFAR-10 uses a convolutional autoencoder classifier trained for 80 epochs using Adam with LR $10^{-3}$. CIFAR-100 uses a CIFAR-adapted ResNet-50 trained for 150 epochs using SGD with LR $0.05$. All use batch size $128$ and cross-entropy/hinge losses, with reconstruction loss for the autoencoder models.

We compare source calibration, hard pseudo-calibration, source-tuned pseudo-calibration, and the unlabeled-target baselines ECP and WQLCP~\cite{kasa2025adaptingpredictionsetsdistribution,Alijani_2025_CVPR}. Hard pseudo-calibration is included as the theorem-backed unlabeled-target reference, not as the preferred practical method under large shifts. 
\rev{We report empirical coverage and expected set size (ESS), where ESS is the average test-set prediction-set cardinality, $\frac{1}{|\mathcal D^{Q_\sigma}_{\mathrm{tst}}|}\sum_{x\in\mathcal D^{Q_\sigma}_{\mathrm{tst}}}|C(x)|$. }
All curves are averaged over five independent runs with fresh calibration/test splits and retraining. Shaded bands denote one standard deviation (std).

\begin{figure}[t]
  \centering
  \begin{subfigure}[b]{0.49\linewidth}
    \centering
    \includegraphics[width=\linewidth]{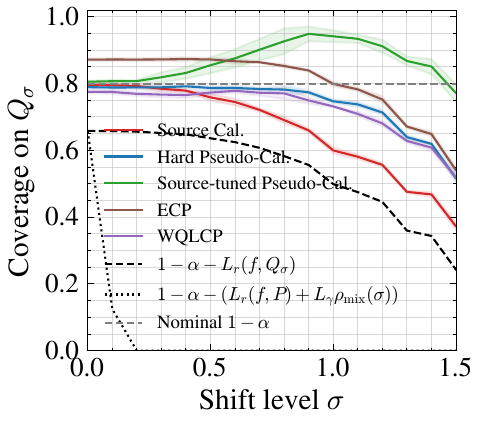}
    \caption{MNIST}
  \end{subfigure}
  \hfill
  \begin{subfigure}[b]{0.49\linewidth}
    \centering
    \includegraphics[width=\linewidth]{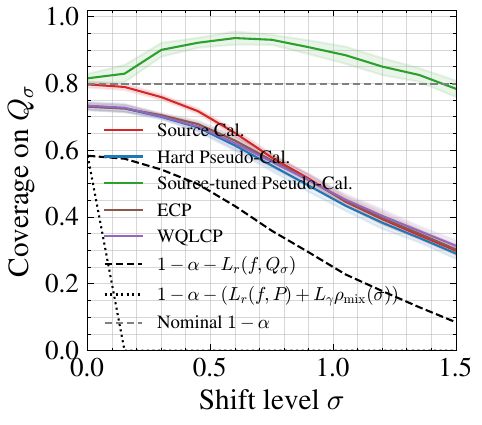}
    \caption{CIFAR-10}
  \end{subfigure}
  \caption{Coverage on $Q_\sigma$ vs. shift $\sigma$ for various methods. Dashed curves show the coverage bounds from Theorem~\ref{thm:pseudo-calibration}. \rev{Shaded bands indicate one std across five independent runs.}}
  \label{fig:cov_vs_sigma_all}
\end{figure}

\begin{figure}[t]
  \centering
  \begin{subfigure}[b]{0.49\linewidth}
    \centering
    \includegraphics[width=\linewidth]{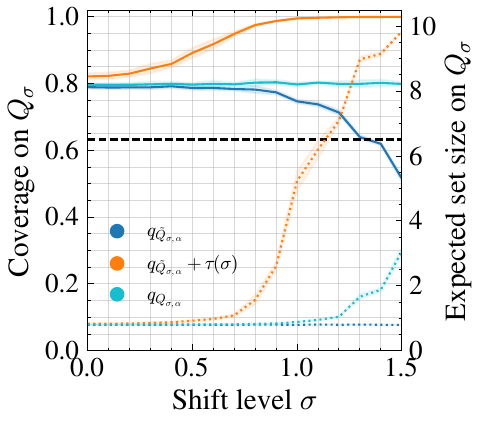}
    \caption{MNIST}
  \end{subfigure}
  \hfill
  \begin{subfigure}[b]{0.49\linewidth}
    \centering
    \includegraphics[width=\linewidth]{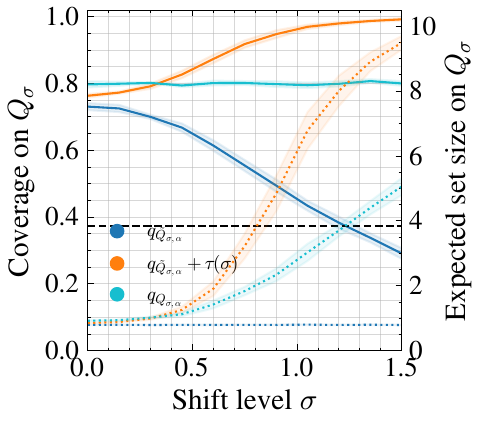}
    \caption{CIFAR-10}
  \end{subfigure}
  \caption{Coverage (solid, left axis) and ESS (dashed, right axis) on $Q_\sigma$ vs. shift for various methods: hard pseudo-calibration \rev{$q_{\tilde Q_\sigma,\alpha}$}, $\tau$-adjusted \rev{$q_{\tilde Q_\sigma,\alpha}+\tau(\sigma)$}, and oracle $q_{Q_\sigma,\alpha}$. The black curve is the hinge-loss lower bound.}
\label{fig:tau_adjusted}
\end{figure}

\rev{Table~\ref{tab:all_cov_ess} summarizes coverage and ESS at representative shifts. ESS can be below one because empty sets may occur.} Source and hard pseudo-calibration degrade as $\sigma$ increases, while source-tuned pseudo-calibration improves coverage, often approaching oracle at moderate shifts.
\rev{This improvement is conservative: randomization raises the pseudo-scores and the quantile, so coverage is restored but can overshoot, most visibly on CIFAR-100, where the harder 100-class task yields more uncertain points and weaker margins under shift.}

\rev{To connect these observations with our theoretical results, we evaluate the coverage lower bound implied by Theorem~\ref{thm:pseudo-calibration}. For MNIST and CIFAR-10, we estimate the ramp loss $L_r(f, Q_\sigma)$ using the oracle labeled samples from $Q_\sigma$, and compute the corresponding lower bound (with $\tau = 0$). Fig.~\ref{fig:cov_vs_sigma_all} plots empirical coverage of source calibration, hard pseudo-calibration, and source-tuned pseudo-calibration on $Q_\sigma$ as a function of $\sigma$, together with the theoretical bounds and the nominal level. Although conservative, the bounds track the coverage degradation across the full range of $\sigma$, and remain below the empirical coverage of hard pseudo-calibration.}

\rev{Corollary~\ref{cor:hinge_tau} permits any $\tau$ fixed independently of the target pseudo-calibration sample and test point. Thus, a $\sigma$-dependent $\tau$ would still be covered if the shift level, or the information used to choose $\tau$, is not reused for pseudo-calibration/testing. In our diagnostic experiment, however, we choose $\tau(\sigma)$ by matching the hinge-loss term in~\eqref{eq:pseudo_cov_lb_hinge}, using oracle target-label information through $L_h(f,Q_\sigma)$. We therefore report this rule only as a bound-motivated illustration of threshold inflation, not as a deployable unlabeled-target procedure.}

\rev{Let $\Delta_P := (1-\alpha) - \mathbb P_P\!\left(Y\in C^{1-\alpha}_{P,0}(X)\right)$ where $C^{1-\alpha}_{P,0}$ is obtained by calibrating with pseudo-labels. We estimate $L_h(f,P)$ on labeled source data and, for this illustrative experiment only, assume $L_h(f,Q_\sigma)$ is known. Choosing $\tau$ so that the hinge-loss term $L_h(f,Q_\sigma)/(1+\tau/2)$ equals $L_h(f,P)-\Delta_P$ gives
$\tau(\sigma)
=
2\left(\frac{L_h(f,Q_\sigma)}{L_h(f,P)-\Delta_P}-1\right),
$
and we build prediction sets on $Q_\sigma$ using the relaxed threshold $q_{\tilde Q_\sigma,\alpha}+\tau(\sigma)$.} As shown in Fig.~\ref{fig:tau_adjusted}, the unadjusted pseudo-calibration drops as the shift increases, whereas the adjusted scheme \rev{improves empirical} coverage, \rev{bringing it closer to $1-\alpha$,} at the cost of larger ESS.

\vspace{-0.4em}
\section{Conclusion}
We studied conformal prediction under distribution shift with unlabeled target data. We derived coverage lower bounds that explicitly connect target-domain coverage to classifier properties and distribution shift measures. Building on these results, we proposed a source-tuned pseudo-calibration method that interpolates between hard pseudo-labels and randomized labels using an uncertainty measure, mitigating the pessimism inherent in pseudo-calibration. \rev{Experiments show improved coverage under shift, at the cost of larger prediction sets.}

\newpage

\bibliographystyle{IEEEtran}
\bibliography{references}

\newpage

\appendix

\subsection{Proof of Lemma~\ref{lemma:W1upperbounds}}

Fix $\epsilon>0$. For each $y$, pick $\pi_y\in\Pi(P_{X\mid y},Q_{X\mid y})$ with
\begin{equation}\label{eq:cond_coupling}
\operatorname*{ess\,sup}_{(X,X')\sim\pi_y}\|X-X'\|_2\le \rho+\epsilon.
\end{equation}
Let $Y\sim P_Y$, and conditional on $Y=y$ sample $(X,X')\sim\pi_y$. Then
$(X,Y)\sim P_{XY}$ and $(X',Y)\sim Q_{XY}$, and
$|s(X,Y)-s(X',Y)|\le L_\gamma\|X-X'\|_2$ a.s. Hence
$
W_1(s\#P,s\#Q)
\le L_\gamma\,\mathbb E\|X-X'\|_2 \le L_\gamma(\rho+\epsilon)$.
Let $\epsilon\rightarrow 0$.

\vspace{-10pt}

\subsection{Kantorovich-Rubinstein Inequality}
\begin{lemma}
\label{lemma:KR}
Let $(\mathcal{X},d)$ be a metric space, let $P,Q$ be probability measures on $\mathcal{X}$, and let $f:\mathcal{X}\to\mathbb{R}$ be $L$-Lipschitz. Then
$
\big|\mathbb{E}_P[f(X)] - \mathbb{E}_Q[f(X')]\big|
\le L\,W_1(P,Q)$.
\end{lemma}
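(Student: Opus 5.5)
The plan is to use the coupling characterization of $W_1$ directly. We do not need Kantorovich--Rubinstein duality: the inequality in Lemma~\ref{lemma:KR} is only the ``easy'' direction of that duality.

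If $W_1(P,Q)=\infty$ the claim is trivial, so assume $W_1(P,Q)<\infty$. We also assume, as implicit in the statement, that $\mathbb{E}_P|f(X)|$ and $\mathbb{E}_Q|f(X')|$ are finite, so that the left-hand side is well defined.

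Fix an arbitrary coupling $\pi\in\Pi(P,Q)$ with $\mathbb{E}_\pi[d(X,X')]<\infty$. Since the marginals of $\pi$ are $P$ and $Q$, linearity of expectation gives
\[
\mathbb{E}_P[f(X)]-\mathbb{E}_Q[f(X')]=\mathbb{E}_{(X,X')\sim\pi}\big[f(X)-f(X')\big].
\]
The Lipschitz property gives $|f(X)-f(X')|\le L\,d(X,X')$ pointwise. Moving the absolute value inside the expectation then yields
\[
\big|\mathbb{E}_P[f(X)]-\mathbb{E}_Q[f(X')]\big|\le \mathbb{E}_\pi\big|f(X)-f(X')\big|\le L\,\mathbb{E}_\pi[d(X,X')].
\]

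The left-hand side does not depend on $\pi$. Taking the infimum over all $\pi\in\Pi(P,Q)$ therefore gives the bound $L\,W_1(P,Q)$. If the infimum is not attained, take an $\epsilon$-optimal coupling and let $\epsilon\to0$, as in the proof of Lemma~\ref{lemma:W1upperbounds}.

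The only delicate step is integrability when splitting $\mathbb{E}_\pi[f(X)-f(X')]$ into the two marginal expectations. This is handled by the finiteness of the marginal expectations assumed above. Alternatively, one can fix a base point $x_0$ and note that $|f(x)|\le |f(x_0)|+L\,d(x,x_0)$. Then $P$ and $Q$ having finite first moments (which holds in the paper's setting whenever the $W_1$ distances used are finite and one measure has a finite first moment) ensures integrability. No other obstacle is expected.
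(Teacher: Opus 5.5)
Your proof is correct and is essentially the paper's argument: for an arbitrary coupling $\pi\in\Pi(P,Q)$, bound $|\mathbb{E}_\pi[f(X)-f(X')]|$ by $L\,\mathbb{E}_\pi[d(X,X')]$, then take the infimum over $\Pi(P,Q)$. Your added care about integrability is a harmless refinement the paper omits (in its application $f=\ell_y\in[0,1]$, so integrability is automatic). The $\epsilon$-optimal coupling step is unnecessary, since the bound holds for every $\pi$ and passes directly to the infimum.
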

\begin{IEEEproof}
For any coupling $\pi\in\Pi(P,Q)$ with $(X,X')\sim\pi$,
$|\mathbb E[f(X)-f(X')]| \le \mathbb E|f(X)-f(X')|
\le L\,\mathbb E[d(X,X')]$.
Taking $\inf_{\pi\in\Pi(P,Q)}$ yields the claim.
\end{IEEEproof}

\vspace{-10pt}

\subsection{Proof of Theorem~\ref{thm:pseudo-calibration}}
Let $\mathbb P_Q$ be as in Theorem~\ref{thm:pseudo-calibration}. Define
$S_{n+1}:=s(X_{n+1},Y_{n+1})$, $\tilde S_{n+1}:=s(X_{n+1},f(X_{n+1}))$, and
$\tilde S_i:=s(X_i,f(X_i))$ for $i\le n$. Let $(X,Y)\sim Q_{XY}$ and set
$S:=s(X,Y)$, $\tilde S:=s(X,f(X))$. By split conformal validity,
$\mathbb P_Q(\tilde S_{n+1}\le q_{\tilde Q,\alpha})\ge 1-\alpha$.
Hence, for any $\tau\ge0$,
{\small \begin{equation}
\mathbb P_Q\big( S_{n+1} > q_{\tilde Q,\alpha} + \tau \big)
\le
\alpha + Q\big( S - \tilde S > \tau \big).
\label{eq:cor_union_to_margin}
\end{equation}}

% We now bound $Q(S - \tilde S > \tau)$.

If $Y = f(X)$, then $S = \tilde S$.
Thus 
$\{S - \tilde S > \tau\} \subseteq \{Y \neq f(X)\}.$
On the event $\{Y \neq f(X)\}$,
$
S - \tilde S = -\gamma_f(X,Y) + \gamma_f(X,f(X)).
$
We have
$
0 \le \gamma_f(X,f(X)) \le -\gamma_f(X,Y),
$
and therefore
$S - \tilde S
\le 2\bigl(-\gamma_f(X,Y)\bigr)$.
Hence,
$\{S - \tilde S > \tau\}
\subseteq
\Bigl\{\, 2(-\gamma_f(X,Y)) > \tau \,\Bigr\}
=
\Bigl\{\, \gamma_f(X,Y) < -\tfrac{\tau}{2} \,\Bigr\}$, and consequently,
\begin{equation}
\label{eq:cor_start}
Q\bigl(S - \tilde S > \tau\bigr)
\le
Q\Bigl( \gamma_f(X,Y) \le -\tfrac{\tau}{2} \Bigr).
\end{equation}
Next we control $Q(\gamma_f(X,Y) \le -\tfrac{\tau}{2})$ via the ramp loss.
Recall the ramp loss $\ell_r((x,y);f) = r(\gamma_f(x,y))$.
For any $\gamma \le 0$ we have $1-\gamma \ge 1$, hence
$
r(\gamma) = \min\{(1-\gamma)^+,1\} = 1.
$
Thus, on the event $\{\gamma_f(X,Y) \le -\tfrac{\tau}{2}\}$ we have
$
\mathbbm{1}\{\gamma_f(X,Y) \le -\tfrac{\tau}{2}\} \le
r\bigl(\gamma_f(X,Y)\bigr)$.
Taking expectations under $Q_{XY}$ yields
\[
Q\!\left(\gamma_f(X,Y) \le -\tfrac{\tau}{2}\right) 
\le \mathbb{E}_{Q}\!\left[r\bigl(\gamma_f(X,Y)\bigr)\right] = L_r(f,Q).
\]
Since $\tau\ge0$, we have
$Q\!\left(\gamma_f(X,Y) \le -\tfrac{\tau}{2}\right) \le
Q\!\left(\gamma_f(X,Y) \le 0\right) \le
L_r(f,Q)$. 

% It remains to relate $L_r(f,Q)$ and $L_r(f,P)$.
For each $y\in[K]$, define $\ell_y(x) := r\big(\gamma_f(x,y)\big)$.
By Assumption~\ref{assump:boundlip}, and since $r(\cdot)$ is 1-Lipschitz, $\ell_y$ is $L_\gamma$-Lipschitz in $x$.
We can write
$
L_r(f,P)
=
\sum_{y=1}^K P_Y(y) \, \mathbb{E}_{P_{X\mid y}}[\ell_y(X)],
$
and $L_r(f,Q)
=
\sum_{y=1}^K Q_Y(y) \, \mathbb{E}_{Q_{X\mid y}}[\ell_y(X)]$.
By Assumption~\ref{assump:shift}(i),
{\small\[
L_r(f,Q) - L_r(f,P)
=
\sum_{y=1}^K P_Y(y)
\big(
\mathbb{E}_{Q_{X\mid y}}[\ell_y(X)]
-
\mathbb{E}_{P_{X\mid y}}[\ell_y(X)]
\big).
\]}By Lemma~\ref{lemma:KR}, for each $y$,
$
\big|
\mathbb{E}_{Q_{X\mid y}}[\ell_y(X)]
-
\mathbb{E}_{P_{X\mid y}}[\ell_y(X)]
\big|
\le
L_\gamma \, W_1(P_{X\mid y},Q_{X\mid y})$.
Therefore,
{\small \[
\big|L_r(f,Q) - L_r(f,P)\big|
\le L_\gamma \sum_{y=1}^K P_Y(y) \, W_1(P_{X\mid y},Q_{X\mid y})=L_\gamma \rho_{\mathrm{mix}},
\]}and $L_r(f,Q) \le L_r(f,P) + L_\gamma \rho_{\mathrm{mix}}$.
Combining yields
$
Q\big( S - \tilde S > 0 \big)
\le
{L_r(f,P) + L_\gamma \rho_{\mathrm{mix}}}.
$
Thus
$\mathbb P_Q\big( S_{n+1} > q_{\tilde Q,\alpha} \big) \le \alpha +
{L_r(f,P) + L_\gamma \rho_{\mathrm{mix}}}$.
Finally,
$\mathbb P_Q\big( Y_{n+1} \in C^{1-\alpha}_{\tilde Q}(X_{n+1}) \big)
=
\mathbb P_Q\big( S_{n+1} \le q_{\tilde Q,\alpha} \big),
$
which gives \eqref{eq:pseudo_cov_lb_ramp}.

\vspace{-10pt}

\subsection{Proof of Corollary~\ref{cor:hinge_tau}}
Fix $\tau\ge 0$. Let $(X,Y) \sim Q_{XY}$, and define $\tilde Y := f(X)$,
$S := s(X,Y)$, $\tilde S := s(X,\tilde Y)$.
For the relaxed set
$C^{1-\alpha}_{\tilde Q,\tau}(X_{n+1})=\{y: s(X_{n+1},y)\le q_{\tilde Q,\alpha}+\tau\}$,
we have
$
\{Y_{n+1}\notin C^{1-\alpha}_{\tilde Q,\tau}(X_{n+1})\}
=
\{S_{n+1}>q_{\tilde Q,\alpha}+\tau\}$.
Using~\eqref{eq:cor_union_to_margin} and~\eqref{eq:cor_start},
$\mathbb P_Q(S_{n+1}>q_{\tilde Q,\alpha}+\tau)
\le
\alpha + Q(S-\tilde S>\tau)\le
\alpha + Q\!\left(\gamma_f(X,Y)\le -\tfrac{\tau}{2}\right)$.
On the event $\{\gamma_f(X,Y)\le -\tfrac{\tau}{2}\}$ we have
$1-\gamma_f(X,Y)\ge 1+\tfrac{\tau}{2}$, hence
$\ell_h((X,Y);f)=\max\{1-\gamma_f(X,Y),0\}\ \ge\ 1+\tfrac{\tau}{2}$.
Therefore,
$
\mathbbm{1}\!\left\{\gamma_f(X,Y)\le -\tfrac{\tau}{2}\right\}
\le
\frac{\ell_h((X,Y);f)}{1+\tfrac{\tau}{2}}$.
Taking expectation under $Q_{XY}$ yields
$
Q\!\left(\gamma_f(X,Y)\le -\tfrac{\tau}{2}\right)
\le
\frac{L_h(f,Q)}{1+\tfrac{\tau}{2}}.
$
Combining gives
$
\mathbb P_Q(S_{n+1}>q_{\tilde Q,\alpha}+\tau) \le
\alpha + \frac{L_h(f,Q)}{1+\tfrac{\tau}{2}}
$
and~\eqref{eq:pseudo_cov_lb_hinge}.

\vspace{-10pt}

\subsection{Proof of Theorem~\ref{thm:relative_coverage}}
\emph{Part (i).}
Let $X_1,\dots,X_{n+1}\stackrel{\text{i.i.d.}}{\sim} Q_X$, write $\mathcal D_Q^{\rm cal}=\{X_i\}_{i=1}^n$, and $\tilde Y_i = f(X_i)$ with $\tilde S_i := s(X_i,\tilde Y_i)$. Let $q_{\tilde{Q},\alpha}$ be the $(1-\alpha)$ threshold of $\{\tilde S_i\}_{i=1}^n$. Define
$C_{\tilde{Q}}^{1-\alpha}(x)
:=
\{y: s(x,y)\le q_{\tilde{Q},\alpha}\}$.
Let $\tilde Y_u(X_i)$ be defined as in~\eqref{eq:randompseudolabel} with randomizations $U_{1:n}$, and $S_i^u := s(X_i,\tilde Y_u(X_i))$. Let $q_{\tilde{Q}^u,\alpha}$ be the split-conformal $(1-\alpha)$ threshold from $\{S_i^u\}_{i=1}^n$. %Let
%$
%C_{\tilde{Q}^u}^{1-\alpha}(x)
%:=
%\{y\in[K]: s(x,y)\le q_{\tilde{Q}^u,\alpha}\}$.

If $H(X_i)\le u$, then $S_i^u=\tilde S_i$, while if $H(X_i)>u$, the label $\tilde Y_u(X_i) \sim \text{Unif}([K])$, and since $f(X_i)$ minimizes $s(X_i,y)$ over $y\in[K]$ we have
$
s(X_i,\tilde Y_u(X_i))
\ge
s(X_i,f(X_i))
=
\tilde S_i
$
for every realization. Thus, $S_i^u \ge \tilde S_i$.
Consequently, for any $t\in\mathbb R$, we have $\{S_i^u \le t\}\subseteq \{\tilde S_i\le t \}$ implying
$
\frac{1}{n}\sum_{i=1}^n \mathbbm{1}\{S_i^u \le t\}
\le
\frac{1}{n}\sum_{i=1}^n \mathbbm{1}\{\tilde S_i \le t\}$.
% So the empirical CDF based on $\{S_i^u\}_{i=1}^n$ is pointwise no larger than that based on $\{\tilde S_i\}_{i=1}^n$. 

By definition, we have $q_{\tilde{Q}^u,\alpha} \ge q_{\tilde{Q},\alpha}$. 
For the test point $(X_{n+1},Y_{n+1})\sim Q_{XY}$, the indicator $\mathbbm{1}\{s(X_{n+1},Y_{n+1})\le t\}$ is non-decreasing in $t$, hence for every realization of $(\mathcal D_Q^{\rm cal},U_{1:n})$ and $(X_{n+1},Y_{n+1})$ we have
\[
\mathbbm{1}\{s(X_{n+1},Y_{n+1})\le q_{\tilde{Q}^u,\alpha}\}
\ge
\mathbbm{1}\{s(X_{n+1},Y_{n+1})\le q_{\tilde{Q},\alpha}\}.
\]
Taking expectation
{\small$\mathbb{E} \left[\mathbbm{1}\{Y_{n+1}\in C_{\tilde{Q}^u}^{1-\alpha}(X_{n+1})\}\mid \mathcal D_Q^{\rm cal},U_{1:n}\right]
\ge
\mathbb{E} \left[\mathbbm{1}\{Y_{n+1}\in C_{\tilde{Q}}^{1-\alpha}(X_{n+1})\}\mid \mathcal D_Q^{\rm cal},U_{1:n}\right]$}.
Finally, taking expectation over $(\mathcal D_Q^{\rm cal},U_{1:n})$ completes the proof.

\rev{\emph{Part (ii).}
For deterministic $t$, define
$F_u(t)=\Pr\{s(X,\tilde Y_u(X))\le t\}$, $F_0(t)=Q_X\{s(X,f(X))\le t\}$, $F_Q(t)=Q_{XY}\{s(X,Y)\le t\}$.
Since $f(x)$ minimizes $s(x,y)$, randomization can only increase the pseudo-score, and $F_0(t)-F_u(t)=r_u(t)$.
Also, $F_0(t)-F_Q(t)=Q_{XY}\{s(X,f(X))\le t<s(X,Y)\}
\le L_r(f,Q)$.
Hence, for every deterministic $t$,
$F_Q(t)\ge F_u(t)-L_r(f,Q)+r_u(t)$.
Since $q_{\tilde Q^u,\alpha}$ is a function of the calibration data $(\mathcal D_Q^{\rm cal},U_{1:n})$, the marginal coverage is
$\mathbb P_{Q,U}\big(Y\in C_{\tilde Q^u}^{1-\alpha}(X)\big)
=\mathbb E_{\mathcal D_Q^{\rm cal},U_{1:n}}\!\big[F_Q(q_{\tilde Q^u,\alpha})\big]$.
Evaluating the last inequality at $t=q_{\tilde Q^u,\alpha}$ and taking expectation over $(\mathcal D_Q^{\rm cal},U_{1:n})$, split-conformal validity gives
$\mathbb E[F_u(q_{\tilde Q^u,\alpha})]\ge 1-\alpha$ and $\mathbb E[r_u(q_{\tilde Q^u,\alpha})]=R_u$. Therefore,
\[
\mathbb P_{Q,U}\big(Y\in C_{\tilde Q^u}^{1-\alpha}(X)\big) \ge
1-\alpha-L_r(f,Q)+R_u.
\]
}

\begin{figure}[hb]
    \centering
    \begin{subfigure}[b]{0.48\columnwidth}
        \centering
        \includegraphics[width=\linewidth]{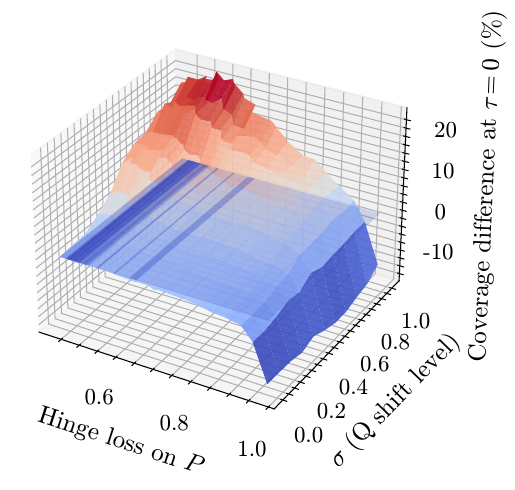}
        \caption{MNIST}
        \label{fig:covdiff_mnist}
    \end{subfigure}
    \hfill
    \begin{subfigure}[b]{0.48\columnwidth}
        \centering
        \includegraphics[width=\linewidth]{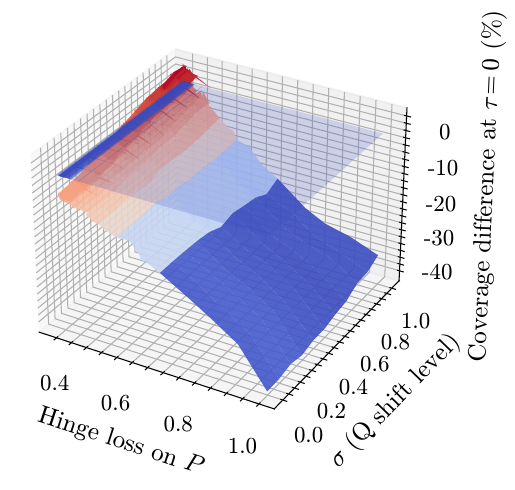}
        \caption{CIFAR-10}
        \label{fig:covdiff_cifar}
    \end{subfigure}
    \caption{Coverage difference of hard pseudo-calibration and source calibration. Positive values indicate that pseudo-calibration has smaller coverage loss.}
    \label{fig:covdiff_mnist_cifar}
\end{figure}

\subsection{Implementation Details}
\label{app:impl_details}

For each dataset, the source classifier $f$ is trained only on the source-domain training split and then kept fixed during calibration and evaluation. Predictive entropy of the classifier output is used as the uncertainty score $H$ in source-tuned pseudo-calibration.

\paragraph{MNIST}
We use an autoencoder-based classifier. Each $28\times 28$ grayscale image is flattened and encoded by a two-layer MLP
$784 \rightarrow 256 \rightarrow 64$, with ReLU activations, and decoded by $64 \rightarrow 256 \rightarrow 784$.
A spectral-normalized linear head maps the $64$-dimensional latent representation to $10$ logits. Training uses SGD with momentum $0.9$, batch size $128$, $30$ epochs, initial learning rate $10^{-4}$, and weight decay $5\times 10^{-3}$, with a multi-step decay at $50\%$ and $75\%$ of training. The loss is
$\mathcal L = \mathcal L_{\mathrm{CE}} + 4\,\mathcal L_{\mathrm{hinge}} + 3\,\mathcal L_{\mathrm{recon}},$
where $\mathcal L_{\mathrm{CE}}$ is cross-entropy with label smoothing $0.1$, $\mathcal L_{\mathrm{hinge}}$ is the multiclass margin-hinge loss with margin $1$, and $\mathcal L_{\mathrm{recon}}$ is mean-squared reconstruction loss. Source training augmentation uses random crop with padding $4$ and random rotation by $10^\circ$.

\paragraph{CIFAR-10}
We use a convolutional autoencoder-based classifier. The encoder consists of convolutional blocks
$3 \rightarrow 32 \rightarrow 64 \rightarrow 128$, with batch normalization and ReLU activations, followed by fully connected layers
$128\cdot 8\cdot 8 \rightarrow 512 \rightarrow 128$ to produce a $128$-dimensional latent representation. The decoder mirrors this structure through
$128 \rightarrow 512 \rightarrow 128\cdot 8\cdot 8$ followed by transposed-convolution upsampling back to the input resolution. A spectral-normalized linear head maps the latent representation to $10$ logits. Training uses Adam, batch size $128$, $80$ epochs, learning rate $10^{-3}$, and weight decay $10^{-4}$. The loss is
$\mathcal L = \mathcal L_{\mathrm{CE}} + \mathcal L_{\mathrm{hinge}} + 0.1\,\mathcal L_{\mathrm{recon}},$
with standard cross-entropy, multiclass margin-hinge loss (margin $1$), and mean-squared reconstruction loss.

\paragraph{CIFAR-100}
We use a ResNet-50-based classifier adapted to $32\times 32$ CIFAR images. The first convolution is replaced by a $3\times 3$ layer with stride $1$ and padding $1$, the initial max-pooling layer is removed, and the backbone's final fully connected layer is replaced by the identity so that the network outputs a latent feature representation $\phi(x)$. A separate linear head maps this feature vector to $100$ logits. Training uses SGD with momentum $0.9$, batch size $128$, $150$ epochs, initial learning rate $0.05$, and weight decay $5\times 10^{-4}$, with a multi-step decay by a factor of $0.1$ at $50\%$ and $75\%$ of training. The loss is
$\mathcal L = \mathcal L_{\mathrm{CE}} + 2\,\mathcal L_{\mathrm{hinge}},$
where $\mathcal L_{\mathrm{CE}}$ is standard cross-entropy and $\mathcal L_{\mathrm{hinge}}$ is the multiclass margin-hinge loss with margin $1$. Source training augmentation uses random crop with padding $4$, random horizontal flip, and color jitter in brightness, contrast, saturation, and hue.

\begin{figure}[t]
  \centering
  \begin{subfigure}[b]{0.49\linewidth}
    \centering
    \includegraphics[width=\linewidth]{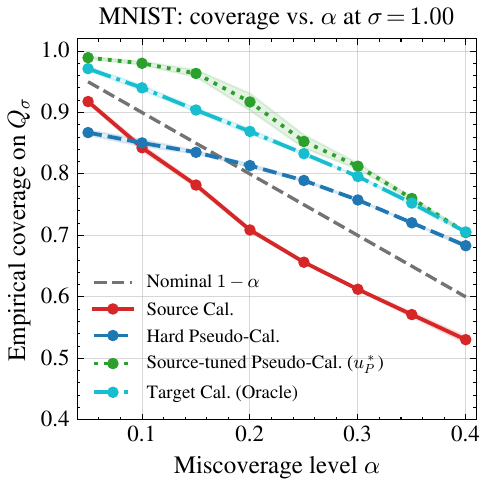}
    \caption{MNIST}
  \end{subfigure}
  \hfill
  \begin{subfigure}[b]{0.49\linewidth}
    \centering
    \includegraphics[width=\linewidth]{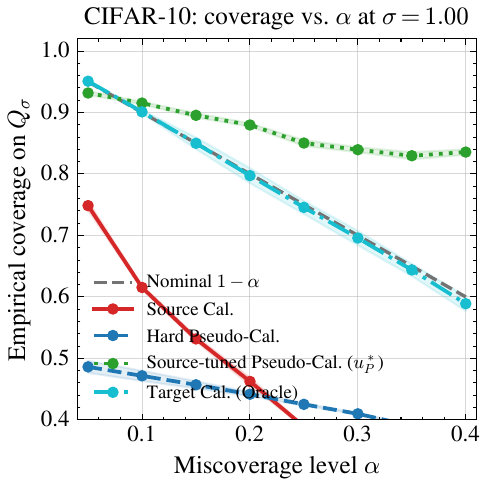}
    \caption{CIFAR-10}
  \end{subfigure}
  \caption{Empirical coverage on $Q_\sigma$ versus the nominal miscoverage level $\alpha$ for representative shifted settings. The dashed line denotes the nominal target $1-\alpha$. Across the tested $\alpha$ values, source-tuned pseudo-calibration remains less prone to undercoverage than hard pseudo-calibration, but is often conservative for smaller $\alpha$.}
  \label{fig:app_alpha_cov}
\end{figure}

\begin{figure}[t]
  \centering
  \begin{subfigure}[b]{0.49\linewidth}
    \centering
    \includegraphics[width=\linewidth]{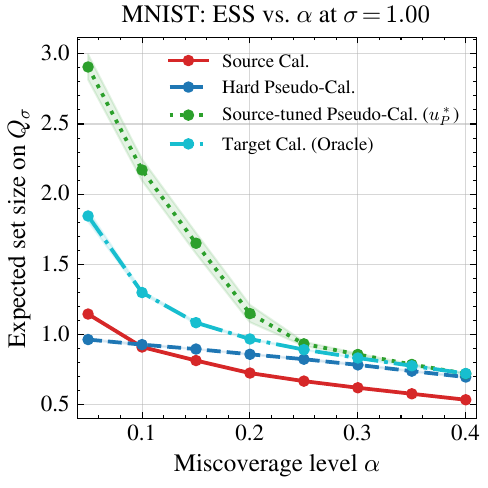}
    \caption{MNIST}
  \end{subfigure}
  \hfill
  \begin{subfigure}[b]{0.49\linewidth}
    \centering
    \includegraphics[width=\linewidth]{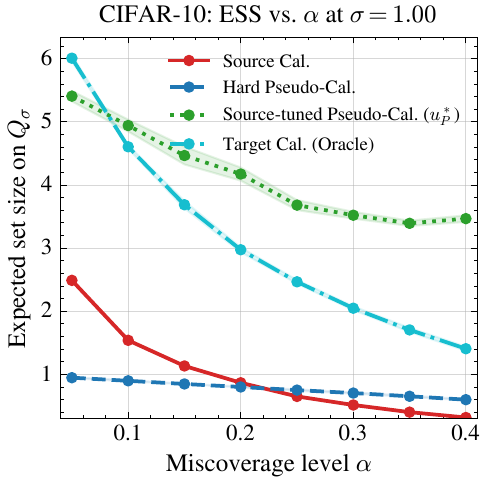}
    \caption{CIFAR-10}
  \end{subfigure}
  \caption{Expected set size (ESS) on $Q_\sigma$ versus the nominal miscoverage level $\alpha$ for representative shifted settings. The improved coverage of source-tuned pseudo-calibration is accompanied by larger ESS, especially for smaller $\alpha$, illustrating the same coverage--set-size tradeoff seen in the main text.}
  \label{fig:app_alpha_ess}
\end{figure}

\subsection{Coverage Difference vs.\ Classifier Loss and Shift Level}
\label{app:covdiff}

Here we report an additional experiment that examines how pseudo-calibration compares with source calibration as the source classifier improves. For each dataset (MNIST and CIFAR-10) and each shift level $\sigma$, we estimate the classifier's loss on the source test split $\mathcal{D}^P_{\mathrm{tst}}$, compute conformal thresholds for source calibration on $\mathcal{D}^P_{\mathrm{cal}}$ and hard pseudo-calibration on $\mathcal{D}^{Q_{\sigma}}_\mathrm{cal}$, and measure the difference in empirical coverage on $\mathcal{D}^{Q_{\sigma}}_\mathrm{tst}$. 
Fig.~\ref{fig:covdiff_mnist_cifar} shows the resulting coverage differences as a function of classifier loss and~$\sigma$. On both datasets, pseudo-calibration tends to achieve higher coverage than source calibration when $L_h(f,P)$ is relatively small. As $L_h(f,P)$ decreases, the range of shift levels $\sigma$ over which pseudo-calibration outperforms source calibration widens, empirically supporting the dependence on the source loss in our theoretical bounds.

\subsection{Sweeps over the Nominal Miscoverage Level}
\label{app:alpha_sweep}

We additionally examine how the methods behave as the nominal miscoverage level $\alpha$ varies. We report these sweeps here at the representative shift level $\sigma=1.0$.

Fig.~\ref{fig:app_alpha_cov} shows empirical coverage as a function of $\alpha$, together with the nominal target line $1-\alpha$. In both datasets, the same qualitative trend as in the main paper persists across the tested $\alpha$ values: hard pseudo-calibration is more prone to undercoverage than source-tuned pseudo-calibration, while the source-tuned method consistently yields higher target-domain coverage. At the same time, this improvement is conservative, especially for smaller $\alpha$, where source-tuned pseudo-calibration tends to overcover relative to the nominal level.

Fig.~\ref{fig:app_alpha_ess} shows the corresponding expected set size (ESS). The improved coverage of the source-tuned method is accompanied by larger ESS, particularly at smaller $\alpha$. As $\alpha$ increases, the ESS gap narrows, but the qualitative tradeoff remains the same. These additional results therefore reinforce the main message of the paper: source-tuning mitigates undercoverage under shift, at the cost of larger prediction sets.

\begin{figure}[t]
  \centering
  \begin{subfigure}[b]{0.49\linewidth}
    \centering
    \includegraphics[width=\linewidth]{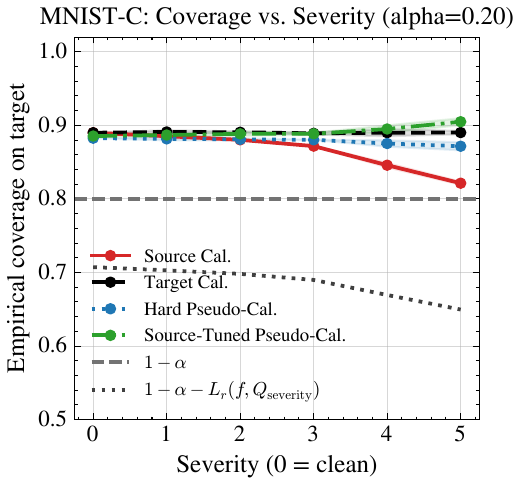}
    \caption{MNIST-C}
  \end{subfigure}
  \hfill
  \begin{subfigure}[b]{0.49\linewidth}
    \centering
    \includegraphics[width=\linewidth]{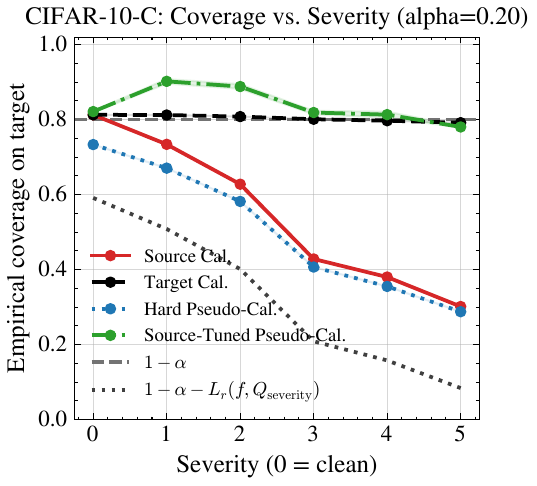}
    \caption{CIFAR-10-C}
  \end{subfigure}
  \caption{Empirical coverage versus corruption severity on public corruption benchmarks at $\alpha=0.2$. Source-tuned pseudo-calibration remains less prone to undercoverage than hard pseudo-calibration across the tested severities, while staying closest to oracle calibration.}
  \label{fig:app_corruption_cov}
\end{figure}

\begin{figure}[t]
  \centering
  \begin{subfigure}[b]{0.49\linewidth}
    \centering
    \includegraphics[width=\linewidth]{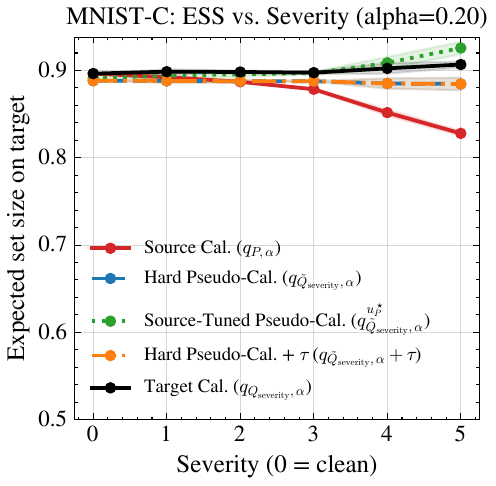}
    \caption{MNIST-C}
  \end{subfigure}
  \hfill
  \begin{subfigure}[b]{0.49\linewidth}
    \centering
    \includegraphics[width=\linewidth]{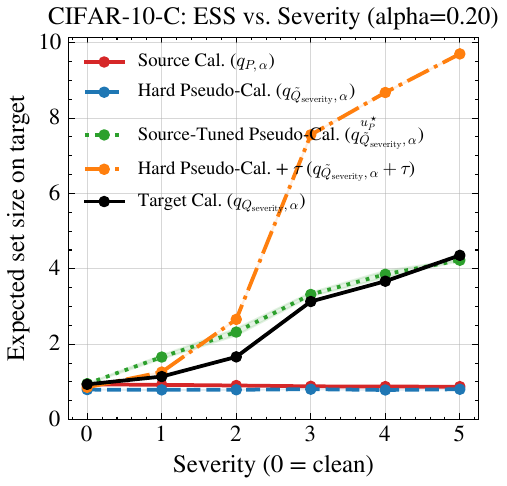}
    \caption{CIFAR-10-C}
  \end{subfigure}
  \caption{Expected set size versus corruption severity on public corruption benchmarks at $\alpha=0.2$. The improved coverage of source-tuned pseudo-calibration is accompanied by larger expected set size, especially on CIFAR-10-C at higher severity.}
  \label{fig:app_corruption_ess}
\end{figure}

\subsection{Public Corruption Benchmarks}
\label{app:corruption_benchmarks}

We evaluate the methods on public corruption benchmarks. We use MNIST-C and CIFAR-10-C with the \texttt{shot\_noise} corruption and severity levels $0,\dots,5$, where severity $0$ denotes the clean setting. We keep the source-trained classifier fixed, use the clean split as the source domain, and treat the corrupted split at each severity level as the target domain. Throughout these experiments, we use $\alpha=0.2$.

Fig.~\ref{fig:app_corruption_cov} shows empirical coverage versus corruption severity. On MNIST-C, source-tuned pseudo-calibration remains close to oracle calibration and above source calibration across the tested severity levels. On CIFAR-10-C, the same qualitative trend is even clearer: source-tuned pseudo-calibration is substantially less prone to undercoverage than hard pseudo-calibration and source calibration as severity increases, while remaining closest to oracle calibration among the unlabeled-target methods.

Fig.~\ref{fig:app_corruption_ess} shows the corresponding expected set sizes. As in the main paper, the improved coverage of the source-tuned method is accompanied by larger prediction sets, especially on CIFAR-10-C at higher corruption severities. These public-benchmark results therefore reinforce the central empirical message of the paper: source-tuning mitigates undercoverage under target shift, but does so conservatively.

\end{document}